\documentclass{article}

\PassOptionsToPackage{dvipsnames}{xcolor}
\PassOptionsToPackage{round,authoryear}{natbib}
\usepackage[preprint]{neurips_2026}

\usepackage[utf8]{inputenc} 
\usepackage[T1]{fontenc}    
\usepackage{hyperref}       
\usepackage{url}            
\usepackage{booktabs}       
\usepackage{amsfonts}       
\usepackage{nicefrac}       
\usepackage{microtype}      
\usepackage{xcolor}         
\usepackage{todonotes}
\usepackage{comment}
\usepackage{float}
\newcommand{\ACD}[1]{\textcolor{purple}{(Aritra: #1)}}
\usepackage{amsmath,amssymb,amsthm,mathtools,bm}
\usepackage{array,booktabs,longtable,tabularx,multirow}
\usepackage{enumitem}
\usepackage{needspace}
\usepackage{siunitx}

\usepackage[dvipsnames]{xcolor}
\usepackage{tikz}
\usetikzlibrary{arrows.meta,positioning,fit,calc,backgrounds}
\usepackage[most]{tcolorbox}
\usepackage{hyperref}
\usepackage[nameinlink,noabbrev]{cleveref}
\usepackage{fancyhdr}

\definecolor{Navy}{HTML}{17324D}
\definecolor{Teal}{HTML}{1F6B70}
\definecolor{SoftBlue}{HTML}{EAF1F7}
\definecolor{SoftTeal}{HTML}{EAF6F4}
\definecolor{SoftGold}{HTML}{FBF5E8}
\definecolor{SoftRed}{HTML}{FBEDEE}
\definecolor{MidGray}{HTML}{5B6570}
\definecolor{LightRule}{HTML}{D6DDE4}

\hypersetup{
  hidelinks,
  pdftitle={Structural Equivariance and Gauge Redundancy in Graph Attention},
  pdfsubject={Local learning coefficient estimation in graph attention models}
}

\newtheorem{theorem}{Theorem}
\newtheorem{proposition}{Proposition}
\newtheorem{lemma}{Lemma}
\newtheorem{corollary}{Corollary}
\theoremstyle{definition}
\newtheorem{definition}{Definition}
\newtheorem{assumption}{Assumption}
\theoremstyle{remark}

\newtcolorbox{keybox}[1][]{
  enhanced,breakable,
  colback=SoftBlue,colframe=Navy,
  boxrule=0.7pt,arc=1.5mm,
  left=2.5mm,right=2.5mm,top=2mm,bottom=2mm,
  title={#1},fonttitle=\bfseries
}
\newtcolorbox{methodbox}[1][]{
  enhanced,breakable,
  colback=SoftTeal,colframe=Teal,
  boxrule=0.7pt,arc=1.5mm,
  left=2.5mm,right=2.5mm,top=2mm,bottom=2mm,
  title={#1},fonttitle=\bfseries
}
\newtcolorbox{cautionbox}[1][]{
  enhanced,breakable,
  colback=SoftGold,colframe=BurntOrange,
  boxrule=0.7pt,arc=1.5mm,
  left=2.5mm,right=2.5mm,top=2mm,bottom=2mm,
  title={#1},fonttitle=\bfseries
}
\newtcolorbox{nonclaimbox}[1][]{
  enhanced,breakable,
  colback=SoftRed,colframe=BrickRed,
  boxrule=0.7pt,arc=1.5mm,
  left=2.5mm,right=2.5mm,top=2mm,bottom=2mm,
  title={#1},fonttitle=\bfseries
}

\newcommand{\R}{\mathbb{R}}
\newcommand{\E}{\mathbb{E}}
\newcommand{\Var}{\operatorname{Var}}

\newcommand{\KL}{\operatorname{KL}}
\newcommand{\rank}{\operatorname{rank}}
\newcommand{\diag}{\operatorname{diag}}
\newcommand{\Aut}{\operatorname{Aut}}
\newcommand{\GL}{\operatorname{GL}}

\newcommand{\ind}{\mathbf{1}}
\newcommand{\cstar}{C_{\star}}
\newcommand{\Astar}{A_{\star}}
\newcommand{\Bstar}{B_{\star}}
\newcommand{\thetastar}{\theta_{\star}}
\newcommand{\Frob}{\mathrm{F}}
\newcommand{\T}{\mathsf{T}}
\newcommand{\dd}{\,\mathrm{d}}
\newcommand{\asympLR}{\mathrel{\asymp}}

\title{\textbf{Symmetries and Singularities}\\[0.45em]}
\workshoptitle{WORKSHOP TITLE}
\author{
  Vishnu Varadarajan \quad Mihir More \quad Aritra Das \quad Debayan Gupta \\
  Truth Audit Labs
}\date{}

\begin{document}

\maketitle
\vspace{-1.8em}
\begin{abstract}
Deep neural networks are highly over-parameterized, and different parameter values represent the same predictive function. This makes their effective complexity difficult to measure using only the number of parameters or the rank of the Hessian. Singular Learning Theory addresses this issue through the local learning coefficient (LLC), which characterizes the effective complexity of a model near a given solution. Existing methods for estimating the LLC often rely on posterior sampling, which can be computationally expensive for large neural networks. This makes accurate LLC estimation difficult at scale. In this work, we use known structures in the model to simplify the analysis and make LLC estimation more tractable. Specifically, we study the LLC of a graph attention model by exploiting symmetries in both the graph structure and the attention parameters. An analytic framework through a teacher--student setting, and explicit LLC estimates after considering the symmetry--induced degeneracies are developed. 

\end{abstract}

\section{Introduction}
With the advent of Deep Learning, the limitations of statistical learning theory in explaining empirically observed behavior such as generalization despite over-parametrization, phase transitions, etc, have been widely noted. Singular Learning Theory (SLT) was proposed as a framework to study the complexities of singular statistical models, and thus deep networks~\citep{Wei_2023}.
SLT identifies the Local Learning Coefficient (LLC) as the key metric of the local effective number of parameters, and the correlation of LLC and these phenomena has been an emerging area of study. \citep{ElliottEtAl2026,CullenEtAl2026} The LLC is the exponent governing the local volume of parameters whose predictive distribution is close to that of a reference parameter~\citep{LauEtAl2025}.  Neural networks are usually singular: several parameter values may implement the same function, the Fisher Information Matrix(FIM) may become degenerate 
and the order of the loss may be more than quadratic in some directions. These features result in the effective number of parameters being a more complicated object than the direct parameter count or the Hessian rank~\citep{Watanabe2009,Watanabe2013}.
An empirical estimator for the LLC based on a localized, tempered posterior has been given by \citet{LauEtAl2025,FurmanLau2024}. Its central computational burden is posterior sampling in a geometry that contains higher-order singular directions. At scale this estimation is computationally hard, and its accuracy is also a concern. The goal of this paper is thus to make this computational burden lesser, and provide better estimations at scale, by exploiting the symmeteries in the model. For this, we look at Graph transformers and look for parameter and input symmetry within this regime. 
We study symmetries at two different levels, (1) of the input graph and (2) of the parameters in the attention block. The graph structure/automorphisms is used to decompose the score-space calculations. The $QK$ block inside the attention module comes with a $GL(k)$ size continuous set of degeneracies, which contributes to the parameter degeneracy. We devise a method that accounts for this degeneracy prior to the LLC estimation. We call this the \emph{$QK$ gauge symmetry}. 

\paragraph{Setup}
For the input graph, we choose one with a known automorphism group, so Cayley graphs become a natural choice. We also assume access to their symmetry group. An arbitrary graph may have a trivial automorphism group and is therefore unsuitable for the decomposition used later.

For measuring the \emph{$QK$ gauge symmetry}, we propose the setup as follows, inspired from the \textbf{wrLLC} setup~\citep{wang2024differentiationspecializationattentionheads}. We consider an ordinary graph-transformer with frozen encoder and the only trainable features as the $QK$ block, that is the $W_Q$ and $W_k$ matrices. To better quantify and track the learning process and enforce realisability, we assume that this model is learning from a teacher model.

\paragraph{Our Results}
In this paper we have motive of developing a parameter/structure symmetry aware analysis of the developmental learning process. We propose a representation theoretic decomposition of Fisher information coming from the graph symmetry in Equation \ref{eq:sector-contributions}. We also identify a key parameter degeneracy in the $QK$ block within the attention mechanism and give explicit reductions in the LLC (Theorem \ref{thm:full-rank}). The teacher-student regime is seen to be the correct framework for analysis, enforcing the realisability and computational criterion. Proposition  \ref{prop:normal-form} and Theorem \ref{thm:rank-r} give a teacher--rank conditioned result, by framing the problem as a reduced--rank martix factorisation problem.

\subsection{Notation}

\begin{table}[H]
\centering
\label{tab:notation}
\begin{tabularx}{\textwidth}{@{}>{\raggedright\arraybackslash}p{0.19\textwidth}X@{}}
\toprule
Symbol & Meaning \\
\midrule
\(G=(V,E)\) & A finite graph \\
\(\Gamma\leq\Aut(G)\) & A chosen graph automorphism group, elements represented by $g$ \\
\(P_g\) & Permutation matrix action of \(g\in\Gamma\) on node-indexed arrays. \\
\(H=E_\eta(G,X)\) & Fixed encoder output in \(\R^{|V|\times d}\), with row \(h_i^{\T}\). \\
\(A,B\) & Notation for \(W_Q,W_K\in\R^{d\times k}\). \\
\(C=AB^{\T}\) & Query--key composite in \(\R^{d\times d}\). \\
\(r\) & Rank of the teacher composite \(\cstar\). \\
\(a=d-r\), \(s=k-r\) & Output and latent dimensions remaining after the rank-\(r\) block. \\
\(F_{\mathrm{score}}\) & Fisher operator in score coordinates. \\
\(J_C,J_\theta\) & Jacobians from composite or factor perturbations to score perturbations. \\
\(\lambda,m\) & Local learning coefficient and multiplicity. \\
\bottomrule
\end{tabularx}
\end{table}

\section{SLT context and notation}

\subsection{Loss and LLC estimation}

Let \(p(y\mid x,\theta)\) be a statistical model, let \(q(y\mid x)\) be the data-generating distribution, and define the population negative log likelihood as
\begin{equation}
  K(\theta)
  =\E_{q(x,y)}\!\left[
      \log\frac{q(y\mid x)}{p(y\mid x,\theta)}
    \right]
  \label{eq:population-K}
\end{equation}
In the realizable teacher--student setting used, \(K\) is a population KL divergence from the teacher and its infimum is zero.

The LLC has multiple formulations, defined initially as the exponent coming out of the normal corssing form of the loss $K$ obtained through blowups and interpreted as the order of the volume scaling around the reference point. For empirical estimations, the Free energy/Marginal likelihood asymptotic expansion formulas are used, and the LLC is interpreted as follows~\citep{Watanabe2009,LauEtAl2025}. 

\begin{definition}[Local LLC/RLCT and multiplicity]
Let \(U\) be a sufficiently small neighbourhood of a minimum \(\thetastar\), and let \(\varphi\) be a smooth density that is strictly positive at \(\thetastar\). Define
\begin{equation}
  Z_U(\tau)
  =\int_U \exp[-\tau K(\theta)]\,\varphi(\theta)\dd\theta.
  \label{eq:local-partition}
\end{equation}
If
\begin{equation}
  Z_U(\tau)
  =C\,\tau^{-\lambda}(\log\tau)^{m-1}\bigl(1+o(1)\bigr),
  \qquad \tau\to\infty,
  \label{eq:partition-asymptotic}
\end{equation}
then \(\lambda\) is the local learning coefficient, equivalently the RLCT under this convention, and \(m\) is its multiplicity.  For estimation, this formulation becomes the most useful. The LLC is a local measure, and so the analysis done below is also localised around the model parameters.
\end{definition}

\subsection{Input Graph}
The Input graph has the aforementioned structure. Let
\begin{equation}
  H=E_\eta(G,X)\in\R^{|V|\times d}
  \label{eq:frozen-encoder}
\end{equation}
be the output of a graph-transformer encoder with frozen parameters \(\eta\). We only require \(E_\eta\) to be equivariant under the action of the graph symmetry, that is,
\begin{equation}
  E_\eta(G,P_gX)=P_gE_\eta(G,X),
  \qquad g\in\Gamma.
  \label{eq:encoder-equivariance}
\end{equation}
Encoders with this property can be constructed in several ways, for instance through weight sharing~\citep{PearceCrumpKnottenbelt2024} or structural encodings~\citep{YingEtAl2021,RampasekEtAl2022}.
Additionally, below we only observe that the joint law of \((G,X,i)\) is \(\Gamma\)-invariant.

\subsection{Attention mechanism and the learning setup}

The graph attention mechanism only slightly deviates from the regular $QK$ attention mechanism. For each node \(i\), the model predicts which node \(i\) attends to, giving a categorical likelihood over \(\mathcal A_i(G)\subseteq V\), the set of nodes visible to \(i\) (for example those within graph distance \(R\)). Let \(M_{ij}\) be the corresponding fixed mask, which is $0$ when $j$ is visible to $i$ and $-\infty$ when not.

The trainable part of the module are the $W_Q$ and $W_k$ matrices. For ease of notation denote,
\begin{equation}
  \qquad A=W_Q,\quad B=W_K,
  \qquad A,B\in\R^{d\times k}, \text{and } C=AB^{\T},
  \label{eq:composite}
\end{equation} where, $d$ represents the embedding dimension and $k$ represents the dimension of the attention head. The logits and attention weights are given as
\begin{align}
  s_{ij}(A,B;H)
   &=\frac{1}{\sqrt{k}}h_i^{\T}AB^{\T}h_j
     +b_{\omega(i,j)}+M_{ij},
  \label{eq:logit}\\
  p_{A,B}(j\mid i,G,X)
   &=\frac{\exp s_{ij}}{\sum_{\ell\in\mathcal A_i(G)}\exp s_{i\ell}},
  \qquad j\in\mathcal A_i(G).
  \label{eq:conditional-softmax}
\end{align}

Our setup includes learning in a teacher--student setting, where the teacher's parameters are given by 
\begin{equation}
  \cstar=\Astar\Bstar^{\T},
  \qquad \rank(\cstar)=r\leq k.
  \label{eq:teacher-composite}
\end{equation}
We draw \((G,X,i)\) from a \(\Gamma\)-invariant distribution. 
Labels are generated from the teacher's attention distribution for the graph nodes \eqref{eq:conditional-softmax}. The model is therefore realizable at \(\cstar\). In addition to realizability, this gives a target to compare an estimated LLC against. The loss is real analytic in \(C\), with \(\cstar\) as an exact minimiser.

Let \(p_\star(\cdot\mid i,G,X)\) denote the teacher distribution generated by \(\cstar\), and define
x\begin{equation}
  K(A,B)
  =\E_{G,X,i}\!
   \left[
     \KL\!\left(
       p_\star(\cdot\mid i,G,X)
       \;\middle\|\;
       p_{A,B}(\cdot\mid i,G,X)
     \right)
   \right].
  \label{eq:attention-KL}
\end{equation}
Because the trainable logits depend on \((A,B)\) only through \(C=AB^{\T}\), write this also as \(\widetilde K(C)\).

For a fixed row distribution \(p\), the softmax Fisher matrix is
\begin{equation}
  \mathcal I(p)=\diag(p)-pp^{\T}.
  \label{eq:softmax-fisher} 
\end{equation}
See~\citep[eq.~(4.106)]{bishop2006} for the derivation.

It is positive semidefinite with kernel spanned by the all-ones vector, reflecting the invariance of softmax to a common row shift.

\section{Structural graph symmetry and score-space decomposition}
{\color{teal}}
\begin{lemma}[Logits are equivariant]
\label{lem:logit-equivariance}
Let us write the logit matrix as $S_{A,B}(H)=\tfrac{1}{\sqrt k}\,HAB^{\T}H^{\T}$.
Then for every $g\in\Gamma$,
\begin{equation}
  S_{A,B}(P_gH)=P_g\,S_{A,B}(H)\,P_g^{\T}.
  \label{eq:logit-equivariance}
\end{equation}
That is, permuting the nodes before computing the logits gives the same
result as computing the logits first and then permuting rows and columns.
\end{lemma}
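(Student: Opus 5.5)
The identity follows directly from the definition of $S_{A,B}$ and associativity of matrix multiplication, so the plan is a single computation. Substitute $P_gH$ for $H$ in $S_{A,B}(H)=\tfrac{1}{\sqrt k}HAB^{\T}H^{\T}$ and use the transpose rule $(P_gH)^{\T}=H^{\T}P_g^{\T}$:
\[
S_{A,B}(P_gH)=\tfrac{1}{\sqrt k}\,(P_gH)AB^{\T}(P_gH)^{\T}
=\tfrac{1}{\sqrt k}\,P_g H A B^{\T} H^{\T} P_g^{\T}.
\]
Pulling the scalar $1/\sqrt k$ through and regrouping the product as $P_g\bigl(\tfrac{1}{\sqrt k}HAB^{\T}H^{\T}\bigr)P_g^{\T}$ then gives exactly $P_gS_{A,B}(H)P_g^{\T}$, which is \eqref{eq:logit-equivariance}.

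No property of $P_g$ beyond its being a $|V|\times|V|$ matrix acting on the node index is needed. In particular, orthogonality of $P_g$ plays no role, and the same holds for any parameters $A,B$, since $C=AB^{\T}$ acts only on the feature dimension $d$ while $P_g$ acts only on the node dimension. The key observation is that these two actions live on different sides of $H$ and therefore commute trivially.

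To connect the identity with the rest of the setup, I would add two remarks. First, by \eqref{eq:encoder-equivariance} we have $H(G,P_gX)=P_gH(G,X)$, so the logits computed from permuted inputs are the permuted logits. Second, the full logits \eqref{eq:logit} also carry the bias $b_{\omega(i,j)}$ and the mask $M_{ij}$. These transform in the same way provided $\omega$ and the visibility sets $\mathcal A_i(G)$ are $\Gamma$-invariant, meaning $\omega(g i,g j)=\omega(i,j)$, which holds because $g$ is a graph automorphism and graph distance is preserved. However, the lemma as stated concerns only the bilinear part $S_{A,B}$, so this extension is optional.

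There is no real obstacle here. The only point needing care is the index convention: I would check that $P_g$ acting on rows of $H$ by $(P_gH)_i=h_{g^{-1}i}$ corresponds to $(P_gSP_g^{\T})_{ij}=S_{g^{-1}i,\,g^{-1}j}$, so that the matrix statement matches the entrywise reading "permute rows and columns."
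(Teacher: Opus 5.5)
Your proposal is correct and is essentially the paper's proof: substitute $P_gH$, use $(P_gH)^{\T}=H^{\T}P_g^{\T}$, and regroup to obtain $P_gS_{A,B}(H)P_g^{\T}$, then link it to \eqref{eq:encoder-equivariance} exactly as the paper does. One small caveat on your optional remark: the invariance $\omega(gi,gj)=\omega(i,j)$ is an extra assumption, not a consequence of $g$ being an automorphism, since the paper never specifies $\omega$ (it holds only if, e.g., $\omega$ depends on graph distance); this lies outside the lemma as stated.
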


\begin{proof}
Since $A$ and $B$ are the same for every node,
\[
  S_{A,B}(P_gH)
  =\tfrac{1}{\sqrt k}(P_gH)AB^{\T}(P_gH)^{\T}
  =P_g\bigl(\tfrac{1}{\sqrt k}HAB^{\T}H^{\T}\bigr)P_g^{\T}
  =P_gS_{A,B}(H)P_g^{\T}.
\]
Together with \eqref{eq:encoder-equivariance}, this shows that relabelling
the input nodes by $g$ relabels the logits by $g$ as well.
\end{proof}

The right-hand side of \eqref{eq:logit-equivariance} defines how $\Gamma$
acts on logit matrices,
\begin{equation}
  \rho_{\mathrm{score}}(g)S=P_gSP_g^{\T},
  \label{eq:score-action}
\end{equation}
and this is a representation of $\Gamma$ because $P_{gh}=P_gP_h$.

A finite group maps a parameter to finitely many others, as opposed to a continuous set of parameters with the same predictions. It therefore creates no flat directions. The local partition integral \eqref{eq:local-partition} is multiplied by a constant, and \(\lambda\) is unchanged. We now use the representation to decompose the space of score perturbations into its isotypic components under \(\Gamma\), the subspaces associated with the irreducible representations of \(\Gamma\) on which the score-space Fisher operator acts. 

The Fisher information induced by the probability distribution at the output of the attention block can be expressed as the pullback of the Fisher operator in score space. More precisely, the map from the model parameters $\theta$ to the attention probabilities factors through the logits and the softmax.

In our setup, the loss $K$ in \eqref{eq:attention-KL} is a finite sum over a set
$\mathcal D$ of inputs $(G,X,i)$, and $\mathcal D$ is closed under $\Gamma$. Let
\[
  \mathcal S=\bigoplus_{(G,X,i)\in\mathcal D}\R^{|V|\times|V|}
\]
be the space of logit matrices, one per input. Then $\Gamma$ acts linearly on
$\mathcal S$: $g$ permutes the inputs and conjugates each logit matrix as in
\eqref{eq:score-action}. We call this representation $\rho_{\mathrm{score}}$. The space $\mathcal{S}$ admits a decomposition into isotypic components based on the irreps corresponding to $\rho_{\text{score}}$, $\mathcal{S}=\bigoplus_\rho\mathcal S_\rho$, where $\rho$ runs over the irreducible representations of $\Gamma$~\citep{Serre1977}.

Our architecture is $\Gamma$-equivariant by construction \eqref{eq:encoder-equivariance}. Let
$F_{\mathrm{score}}$ be the Hessian of $K$, viewed as a function of the logits, evaluated against the logits generated by the teacher composite $\cstar$. It acts input by input: on
input $(G,X,i)$ it applies \eqref{eq:softmax-fisher} with $p=p_\star(\cdot\mid i,G,X)$ to
row $i$ of the logit matrix, and vanishes on the remaining rows. Because of the equivariant structure of the graph and its features, $F_{\mathrm{score}}$ commutes with the group action:
\begin{equation}
  F_{\mathrm{score}}\,\rho_{\mathrm{score}}(g)
  =\rho_{\mathrm{score}}(g)\,F_{\mathrm{score}},
  \qquad g\in\Gamma.
  \label{eq:fisher-commutes}
\end{equation}
And hence, $F_{\mathrm{score}}$ acts invariantly on each isotypic component
$\mathcal S_\rho$. This decompostion also gives a natural block diagonal decomposition of the Fisher information itself, through projectors onto these isotypic components. These projectors are given by
\begin{equation}
  \Pi_\rho=\frac{d_\rho}{|\Gamma|}\sum_{g\in\Gamma}
  \overline{\chi_\rho(g)}\,\rho_{\mathrm{score}}(g),
  \label{eq:isotypic-projector}
\end{equation}
where $\chi_\rho$ and $d_\rho$ are the character and dimension of $\rho$. 

Now through composition of $F_{score}$ and $\Pi_\rho$ we get the following quadratic form local representation of the Fisher information.

\subsection{Population KL as a function of the composite}

{
For a composite perturbation \(\Delta C\), the score perturbation is
\begin{equation}
  \Delta s_{ij}=\frac{1}{\sqrt{k}}h_i^{\T}\Delta C h_j.
  \label{eq:score-perturbation}
\end{equation}
Therefore the Hessian quadratic form of \(\widetilde K\), the population KL, near \(\cstar\) is
\begin{equation}
  \mathcal Q(\Delta C)
  =\frac{1}{k}\E_{G,X,i}
   \Var_{J\sim p_\star(\cdot\mid i,G,X)}
   \left[h_i^{\T}\Delta C h_J\right],
  \label{eq:composite-quadratic-form}
\end{equation}
where $\mathcal Q (\Delta C)$ is the Hessian of the KL, that is the FIM, at $\cstar + \Delta C$.
\begin{assumption}[Composite score identifiability]
\label{ass:identifiability}
There is a constant \(\alpha>0\) such that
\begin{equation}
  \mathcal Q(\Delta C)\geq \alpha\|\Delta C\|_{\Frob}^{2}
  \qquad\text{for all }\Delta C\in\R^{d\times d}.
  \label{eq:identifiability}
\end{equation}
The teacher probabilities remain in the interior of the relevant simplices, and the feature moments needed for a third-order Taylor bound are finite in a neighbourhood of \(\cstar\). In addition, \(\widetilde K\) is real analytic near \(\cstar\). This analyticity is automatic for a finite orbit-closed design with teacher probabilities used as soft labels; for a population expectation it follows from an appropriate dominated-analyticity condition.
\end{assumption}

Condition \eqref{eq:identifiability} says that no nonzero composite perturbation is converted, almost surely, only into unobservable constant shifts of each attention row. It is a substantive property of the graph-state distribution and encoder; it must be verified rather than inferred from architecture alone.

Now let $J_C\colon\Delta C\mapsto\Delta S$ be the differential of the map from the
composite to the logits, so that the quadratic form \eqref{eq:composite-quadratic-form}
is $\mathcal Q(\Delta C)=\langle J_C\Delta C,\,F_{\mathrm{score}}\,J_C\Delta C\rangle$.
Since $F_{\mathrm{score}}=\sum_\rho\Pi_\rho F_{\mathrm{score}}\Pi_\rho$,
\begin{equation}
  \mathcal Q=\sum_\rho\mathcal Q_\rho,
  \qquad
  \mathcal Q_\rho(\Delta C)
  =\langle \Pi_\rho J_C\Delta C,\;
    F_{\mathrm{score}}\,\Pi_\rho J_C\Delta C\rangle,
  \label{eq:sector-contributions}
\end{equation}
and each $\mathcal Q_\rho$ is positive semidefinite. Since a sum of nonnegative terms
vanishes only when every term does,
\begin{equation}
  \mathcal Q(\Delta C)=0
  \iff
  \mathcal Q_\rho(\Delta C)=0\ \text{ for every }\rho.
  \label{eq:kernel-intersection}
\end{equation}
Thus \cref{ass:identifiability} fails exactly when some nonzero $\Delta C$ satisfies
$\mathcal Q_\rho(\Delta C)=0$ for every $\rho$. The decomposition then shows where and
how: in each component, either $\Pi_\rho J_C\Delta C=0$, so $\Delta C$ does not change
the logits in that component at all, or $F_{\mathrm{score}}\,\Pi_\rho J_C\Delta C=0$, so
it adds the same constant to every logit in a row, which leaves the softmax unchanged.
}

\section{Guage symmetry and rank-deficient teacher}

For every \(\xi\in\GL(k)\), consider the map 
\begin{equation}
  (A,B)\longmapsto (A\xi,B\xi^{-\T})
  \label{eq:gauge-action}
\end{equation}
Then
\begin{equation}
  (A\xi)(B\xi^{-\T})^{\T}=AB^{\T}=C,
  \label{eq:gauge-invariance-composite}
\end{equation}
so every logit and predictive probability is unchanged for every input. This is a parameter-space redundancy, or \emph{gauge symmetry}. 
The same algebra occurs in the value--output composite,
\[
  W_V\mapsto W_V\xi,
  \qquad W_O\mapsto \xi^{-1}W_O,
\]
however we restrict ourselves to the $QK$ attention block. 

For a curve \(G(t)=I_k+tX+O(t^2)\), the infinitesimal gauge direction is
\begin{equation}
  \delta A=AX,
  \qquad
  \delta B=-BX^{\T},
  \qquad X\in\R^{k\times k}.
  \label{eq:gauge-tangent}
\end{equation}
The derivative of the composite map \(\pi(A,B)=AB^{\T}\) is
\begin{equation}
  D\pi_{A,B}(\Delta A,\Delta B)
  =\Delta A B^{\T}+A\Delta B^{\T}.
  \label{eq:composite-derivative}
\end{equation}
Substitution of \eqref{eq:gauge-tangent} gives zero. Consequently, gauge tangents lie in the kernel of the Jacobian from parameters to logits and in the kernel of the parameter Fisher information.

\begin{lemma}[Local equivalence of the attention KL and composite error]
\label{prop:local-equivalence}
Under \cref{ass:identifiability}, there is a neighbourhood \(\mathcal N\) of \(\cstar\) and constants \(0<c_1\leq c_2<\infty\) such that
\begin{equation}
  c_1\|C-\cstar\|_{\Frob}^{2}
  \leq \widetilde K(C)
  \leq c_2\|C-\cstar\|_{\Frob}^{2},
  \qquad C\in\mathcal N.
  \label{eq:KL-equivalence}
\end{equation}
Consequently,
\begin{equation}
  K(A,B)\asympLR \|AB^{\T}-\cstar\|_{\Frob}^{2}
  \label{eq:factor-loss-equivalence}
\end{equation}
near any factor pair representing \(\cstar\). With a smooth positive local density, the two functions have the same local LLC and multiplicity.
\end{lemma}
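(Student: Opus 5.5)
We prove the two-sided bound \eqref{eq:KL-equivalence} by a second-order Taylor expansion of \(\widetilde K\) around \(\cstar\), and then transfer it to the factor parametrization through \(\pi(A,B)=AB^{\T}\).

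\textbf{Taylor expansion of \(\widetilde K\).} Since \(\cstar\) is an exact minimiser of the nonnegative function \(\widetilde K\), we have \(\widetilde K(\cstar)=0\). By real analyticity (\cref{ass:identifiability}) we also have \(\nabla\widetilde K(\cstar)=0\). The second-order term is exactly the quadratic form \(\mathcal Q\) of \eqref{eq:composite-quadratic-form}. Concretely, \(\Delta C\mapsto\Delta s\) is linear by \eqref{eq:score-perturbation}, and the Hessian of the row-wise KL in the logits at \(p_\star\) is \(\mathcal I(p_\star)\) from \eqref{eq:softmax-fisher}. The quadratic form of \(\mathcal I(p_\star)\) against \(\Delta s\) is \(\Var_{J\sim p_\star}[\Delta s_{iJ}]\). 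This gives
\[
\widetilde K(\cstar+\Delta C)=\tfrac12\mathcal Q(\Delta C)+R(\Delta C).
\]
The convention on the factor \(\tfrac12\) only affects constants.

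\textbf{Bounding the remainder.} We aim for \(|R(\Delta C)|\le L\|\Delta C\|_{\Frob}^3\) on a ball of radius \(\delta\). The third derivative of the log-partition function of a softmax is a cumulant tensor. It is bounded by a constant times the third moment of \(|\Delta s_{iJ}|\), which is at most \(k^{-3/2}\|h_i\|^3\|h_J\|^3\|\Delta C\|_{\Frob}^3\). Taking expectations, the finiteness of the feature moments near \(\cstar\) (part of \cref{ass:identifiability}) gives a uniform constant \(L\). This step is the main technical obstacle: the constant must be uniform along the segment between \(\cstar\) and \(\cstar+\Delta C\). We handle it with the mean-value (integral) form of the remainder, using the boundedness of softmax cumulants and the stated moment bound in the neighbourhood.

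\textbf{Two-sided bound on \(\widetilde K\).} For the upper bound, note that \(\mathcal Q(\Delta C)\le \frac1k\E\,\E_J[(h_i^{\T}\Delta C h_J)^2]\le \beta\|\Delta C\|_{\Frob}^2\), again by finite moments. For the lower bound, \eqref{eq:identifiability} gives \(\mathcal Q\ge\alpha\|\Delta C\|_{\Frob}^2\). Choose \(\delta\le \alpha/(4L)\), so that \(|R|\le\frac{\alpha}{4}\|\Delta C\|_{\Frob}^2\). Then \(c_1=\alpha/4\) and \(c_2=\beta/2+\alpha/4\) work on \(\mathcal N=\{\|C-\cstar\|_{\Frob}<\delta\}\).

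\textbf{Transfer to the factors.} Since \(K(A,B)=\widetilde K(AB^{\T})\) and \(\pi\) is continuous, there is a neighbourhood of any pair \((\Astar,\Bstar)\) with \(\Astar\Bstar^{\T}=\cstar\) that is mapped into \(\mathcal N\). Substituting \(C=AB^{\T}\) into \eqref{eq:KL-equivalence} yields \eqref{eq:factor-loss-equivalence} on that neighbourhood.

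\textbf{Equality of LLC and multiplicity.} Write \(f=K\) and \(g=\|AB^{\T}-\cstar\|_{\Frob}^2\), so that \(c_1 g\le f\le c_2 g\) on \(U\). Then
\[
Z_f(\tau)\le\int_U e^{-\tau c_1 g}\varphi=Z_g(c_1\tau),
\]
and similarly \(Z_f(\tau)\ge Z_g(c_2\tau)\). If \(Z_g(\tau)\sim C\tau^{-\lambda}(\log\tau)^{m-1}\), then both \(Z_g(c_1\tau)\) and \(Z_g(c_2\tau)\) are of the order \(\tau^{-\lambda}(\log\tau)^{m-1}\), with constants \(c_i^{-\lambda}C\). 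Hence \(Z_f\) has the same exponent and the same log power.

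Strictly, \eqref{eq:partition-asymptotic} requires an exact asymptotic for \(Z_f\), not just two-sided bounds. We close this gap by noting that \(f\) is analytic, so resolution of singularities guarantees an expansion \(Z_f\sim C'\tau^{-\lambda'}(\log\tau)^{m'-1}\). The sandwich then forces \(\lambda'=\lambda\) and \(m'=m\). This is the standard comparison argument of \citet{Watanabe2009}.
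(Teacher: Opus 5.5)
Your proposal is correct and follows the same route as the paper: a second-order Taylor expansion of \(\widetilde K\) at \(\cstar\) with Hessian \(\mathcal Q\), a third-order remainder bound giving the two-sided quadratic estimate, composition with \((A,B)\mapsto AB^{\T}\), and the standard singular-learning-theory comparison of partition functions. You also make explicit what the paper leaves implicit: the cumulant bound on the remainder, the constants \(c_1=\alpha/4\) and \(c_2=\beta/2+\alpha/4\), and the sandwich \(Z_g(c_2\tau)\le Z_f(\tau)\le Z_g(c_1\tau)\), together with existence of the asymptotic expansion via resolution of singularities.
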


\begin{proof}
The gradient of \(\widetilde K\) vanishes at \(\cstar\). Its Hessian is the positive-definite quadratic form \(\mathcal Q\) in \eqref{eq:composite-quadratic-form}. Taylor's theorem and the finite third-order remainder yield a lower and upper quadratic bound in a sufficiently small neighbourhood. Composing with the polynomial map \((A,B)\mapsto AB^{\T}\) gives \eqref{eq:factor-loss-equivalence}. Two nonnegative analytic functions bounded above and below by positive constant multiples have the same leading local partition exponent and multiplicity; this is a standard invariance used in singular learning theory~\citep{Watanabe2009}.
\end{proof}

Assumption \ref{ass:identifiability} makes the map from \(C\) to distributions locally nonsingular modulo softmax row shifts. If this condition fails, the formula in the below sections, for example \ref{thm:rank-r} where one sees this explicitly, need not hold true.

The intent of Lemma \ref{prop:local-equivalence} is to reduce the analysis to that of
\begin{equation}
  K_{\mathrm{mat}}(A,B)=\|AB^{\T}-\cstar\|_{\Frob}^{2}.
  \label{eq:matrix-factor-loss}
\end{equation}
Assume throughout this section that \(d\geq k\) and that the local density in the $A, B$ coordinates is smooth and strictly positive at the specified reference point.

\begin{theorem}[Full-rank local coefficient]
\label{thm:full-rank}
Suppose \(\rank(\cstar)=k\) and \(\Astar,\Bstar\in\R^{d\times k}\) both have full column rank. Then the gauge action \eqref{eq:gauge-action} is locally free, and
\begin{equation}
  \lambda_{\mathrm{full}}
  =\frac{2dk-k^2}{2}
  \qquad m_{\mathrm{full}}=1.
  \label{eq:full-rank-llc}
\end{equation}
\end{theorem}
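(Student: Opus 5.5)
By \cref{prop:local-equivalence}, it suffices to compute the local LLC and multiplicity of $K_{\mathrm{mat}}(A,B)=\|AB^{\T}-\cstar\|_{\Frob}^2$ from \eqref{eq:matrix-factor-loss} near $(\Astar,\Bstar)$. The plan is to show that, near this point, the factor space is a smooth fibration over a neighbourhood of $\cstar$ in the manifold of rank-$k$ matrices, with the fibres being gauge orbits. In adapted coordinates $K_{\mathrm{mat}}$ is then a nondegenerate quadratic in the normal directions and constant along the fibres, so the LLC is half the codimension of the zero set.

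\textbf{Local freeness of the gauge action.} If $\Astar\xi=\Astar$ for some $\xi\in\GL(k)$, then full column rank of $\Astar$ gives $\xi=I_k$. Hence the action \eqref{eq:gauge-action} is free at $(\Astar,\Bstar)$, and also at nearby pairs, since full column rank is an open condition. Infinitesimally, a gauge tangent \eqref{eq:gauge-tangent} with $\Astar X=0$ forces $X=0$. The orbit map $\xi\mapsto(\Astar\xi,\Bstar\xi^{-\T})$ is therefore an immersion, and the orbit is a $k^2$-dimensional submanifold.

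\textbf{Rank of $D\pi$.} Next we show that $D\pi_{\Astar,\Bstar}$ in \eqref{eq:composite-derivative} has rank $2dk-k^2$, so that its kernel is exactly the gauge tangent space. Suppose $\Delta A\Bstar^{\T}+\Astar\Delta B^{\T}=0$. Multiplying on the right by $\Bstar(\Bstar^{\T}\Bstar)^{-1}$ gives $\Delta A=-\Astar Y$ with $Y=\Delta B^{\T}\Bstar(\Bstar^{\T}\Bstar)^{-1}\in\R^{k\times k}$. Substituting back yields $\Astar(\Delta B^{\T}-Y\Bstar^{\T})=0$. Full column rank of $\Astar$ then gives $\Delta B=\Bstar Y^{\T}$. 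This is a gauge tangent with $X=-Y$, so the kernel has dimension exactly $k^2$.

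\textbf{Local normal form.} Since $D\pi$ has constant rank $2dk-k^2$ on the open set where both factors have full column rank, the constant rank theorem applies. It gives local coordinates $(u,v)\in\R^{2dk-k^2}\times\R^{k^2}$ around $(\Astar,\Bstar)$ in which $\pi$ reads $(u,v)\mapsto u$, into a chart of the image manifold $\mathcal M_k$ of rank-$k$ $d\times d$ matrices. We also choose a chart on $\mathcal M_k$ such that $C-\cstar$ is comparable to $u$, that is, $\|C-\cstar\|_{\Frob}\asymp\|u\|$ for $C\in\mathcal M_k$ near $\cstar$. This holds because the chart is a local diffeomorphism onto a submanifold of $\R^{d\times d}$ and the ambient norm restricts to a norm equivalent to the coordinate one. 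Then $K_{\mathrm{mat}}\asymp\|u\|^2$. Combining this with \cref{prop:local-equivalence} and the Jacobian of the coordinate change, which is smooth and positive, reduces \eqref{eq:local-partition} to
\[
Z_U(\tau)\asymp\int e^{-\tau\|u\|^2}\,du\,dv\asymp\tau^{-(2dk-k^2)/2}.
\]
This is a pure power with no logarithm, so $\lambda_{\mathrm{full}}=(2dk-k^2)/2$ and $m_{\mathrm{full}}=1$. Here we use the same comparison principle as in \cref{prop:local-equivalence}: two-sided bounds by positive multiples preserve both $\lambda$ and $m$.

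\textbf{Main obstacle.} The step needing most care is the kernel computation together with the use of the constant rank theorem. One must ensure the rank equals $2dk-k^2$ on a whole neighbourhood, not only at the reference point, and that the distance to the zero set is comparable to $\|u\|$ rather than merely bounded by it. If the constant rank route becomes awkward, an alternative is explicit gauge fixing. Since $\Astar$ has full column rank, some $k\times k$ row block of $A$ is invertible near $\Astar$, and we set that block to $I_k$. This slices the orbit, leaving $dk-k^2$ free entries of $A$ and all $dk$ entries of $B$, and in these coordinates $\pi$ is an injective immersion.
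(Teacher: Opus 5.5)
Your proposal is correct and follows essentially the same route as the paper: identify $\ker D\pi$ at the full-column-rank pair with the $k^2$-dimensional gauge tangent space, apply the constant-rank theorem to obtain coordinates $(u,t)$ in which $K_{\mathrm{mat}}\asymp\|u\|^2$, and read off $\lambda_{\mathrm{full}}=(2dk-k^2)/2$ and $m_{\mathrm{full}}=1$. You also prove two things the paper only asserts: the kernel identification, via the right inverse $\Bstar(\Bstar^{\T}\Bstar)^{-1}$, and local freeness, via the trivial stabilizer. This fills in detail without changing the argument.
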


\begin{proof}
The parameter dimension is \(2dk\). At a full-column-rank pair, the kernel of \(D\pi\) in \eqref{eq:composite-derivative} is exactly the \(k^2\)-dimensional gauge tangent space \eqref{eq:gauge-tangent}. The image is the tangent space of the rank-\(k\) matrix manifold, whose dimension is
\[
  q=2dk-k^2.
\]
The constant-rank theorem gives local coordinates \((u,t)\in\R^q\times\R^{k^2}\) in which the composite depends regularly on \(u\) and not on \(t\). Hence \(K_{\mathrm{mat}}\asympLR\|u\|^2\), which has coefficient \(q/2\) and multiplicity one.
\end{proof}

This first case is a test of whether an estimator handles continuous non-identifiability and gauge-dependent conditioning, but it does not yet contain the higher-order rank singularity.

\subsection{Rank deficiency in the teacher}

To handle the rank-deficient case $(r<k)$, we choose orthogonal coordinates aligned with the singular value decomposition of the teacher matrix ($\cstar$). In general, if
\[
\cstar = U_r\Sigma_r V_r^\top
\]
is a rank-(r) matrix, then a natural balanced factorization is obtained by splitting the nonzero singular values symmetrically as
\[
\Astar = U_r\Sigma_r^{1/2},
\qquad
\Bstar = V_r\Sigma_r^{1/2},
\]
so that
\[
\Astar\Bstar^\top=\cstar.
\]
When $(r<k)$, the factorization has $(k-r)$ additional directions corresponding to the zero singular values. Thus, after choosing separate orthogonal row and column coordinates aligned with the singular vectors of ($\cstar$), we may work in canonical coordinates in which the teacher takes the form
Let \(r<k\), set \(a=d-r\) and \(s=k-r\), and use separate orthogonal row and column coordinates for \(\cstar\) to write
\begin{equation}
  \cstar=
  \begin{bmatrix}
    \Sigma_r&0\\0&0
  \end{bmatrix},
  \qquad \Sigma_r=\diag(\sigma_1,\ldots,\sigma_r),
  \quad \sigma_i>0.
  \label{eq:canonical-C}
\end{equation}
The canonical balanced factorization is
\begin{equation}
  \Astar=\Bstar=
  \begin{bmatrix}
    \Sigma_r^{1/2}&0\\0&0
  \end{bmatrix}
  \in\R^{d\times k}.
  \label{eq:canonical-factors}
\end{equation}
The row blocks have sizes \(r,a\), and the latent-column blocks have sizes \(r,s\).

Write perturbations as
\begin{equation}
  \widetilde \Astar =\Astar+
  \begin{bmatrix}A_{11}&A_{12}\\A_{21}&A_{22}\end{bmatrix},
  \qquad
  \widetilde \Bstar=\Bstar+
  \begin{bmatrix}B_{11}&B_{12}\\B_{21}&B_{22}\end{bmatrix}.
  \label{eq:block-perturbations}
\end{equation}
The first-order change in the composite is 
\begin{align*}
    \Delta C &=(\Astar+\Delta A)(\Bstar+\Delta B)^{\T}- \Astar \Bstar^{\T}\\
    &=A^\star\Delta B^{\T}+\Delta AB^{\star\T}+\Delta A\Delta B^{\T}\\
    &=
    \begin{bmatrix}
      A_{11}\Sigma_r^{1/2}+\Sigma_r^{1/2}B_{11}^{\T}&\Sigma_r^{1/2}B_{21}^{\T}\\
      A_{21}\Sigma_r^{1/2}&0
    \end{bmatrix}
    +
    \begin{bmatrix}
      A_{12}B_{12}^{\T}&A_{12}B_{22}^{\T}\\A_{22}B_{12}^{\T}&A_{22}B_{22}^{\T}
    \end{bmatrix}\\
    &\approx
    \begin{bmatrix}
      A_{11}\Sigma_r^{1/2}+\Sigma_r^{1/2}B_{11}^{\T}&\Sigma_r^{1/2}B_{21}^{\T}\\
      A_{21}\Sigma_r^{1/2}&0
    \end{bmatrix} 
\end{align*}
 This is precisely the derivative of the composite map, $D\pi_{A,B}(\Delta A, \Delta B)$. The rank of the derivative $D\pi_{A,B}$ is thus the dimension of the image, which has $\Delta C_{11}, \ \Delta C_{21}, \ \Delta C_{12}$ as the free coordinates, and thus equal to
\begin{equation}
  r^2+ra+ar=r(2d-r),
  \label{eq:regular-rank}
\end{equation}
which is the dimension of the rank-\(r\) matrix manifold. First-order analysis identifies the regular directions and the loss of quadratic curvature, but it does not determine the LLC of the remaining directions. Their higher-order contribution is obtained from the Schur complement.

\subsection{Schur-complement transverse coordinate}

For a composite matrix near \(\cstar\), write

\begin{equation}
  C=\begin{bmatrix}C_{11}&C_{12}\\C_{21}&C_{22}\end{bmatrix}.
\end{equation}
The block \(C_{11}\) remains invertible in a sufficiently small neighbourhood. Define
\begin{equation}
  R=C_{22}-C_{21}C_{11}^{-1}C_{12}.
  \label{eq:schur-complement}
\end{equation}
The map
\begin{equation}
  C\longmapsto (C_{11},C_{12},C_{21},R)
  \label{eq:schur-coordinate-map}
\end{equation}
 is a local analytic diffeomorphism. Moreover,
\begin{equation}
  \rank(C)=r+\rank(R)
  \label{eq:rank-schur}
\end{equation}
near \(\cstar\). Therefore \(R=0\) is the local rank-\(r\) manifold, and \(R\) measures the rank created transversely to that manifold. Splitting the factors by output rows,
\[
  A=\begin{bmatrix}A_1\\A_2\end{bmatrix},
  \qquad
  B=\begin{bmatrix}B_1\\B_2\end{bmatrix},
  \qquad A_1,B_1\in\R^{r\times k}.
\]
Then
\begin{equation}
  R=A_2PB_2^{\T},
  \qquad
  P=I_k-B_1^{\T}(A_1B_1^{\T})^{-1}A_1.
  \label{eq:R-factor-P}
\end{equation}
The matrix \(P\) is idempotent and has constant rank \(s=k-r\) near the canonical point. A local analytic choice of bases for its image and coimage factors it as \(P=UV_0\), with \(U\in\R^{k\times s}\), \(V_0\in\R^{s\times k}\), and \(V_0U=I_s\). Consequently,
\begin{equation}
  R=(A_2U)(V_0B_2^{\T})=VS,
  \qquad
  V\in\R^{a\times s},\quad S\in\R^{s\times a}.
  \label{eq:VS-block}
\end{equation}

\begin{proposition}[Rank-stratified local normal form]
\label{prop:normal-form}
At the canonical factorization \eqref{eq:canonical-factors}, there are local analytic coordinates consisting of
\begin{equation}
  u\in\R^{r(2d-r)},
  \quad V\in\R^{a\times s},
  \quad S\in\R^{s\times a},
  \quad t\in\R^{r(2k-r)},
  \label{eq:normal-coordinates-dim}
\end{equation}
for which
\begin{equation}
  K_{\mathrm{mat}}(u,V,S,t)
  \asympLR \|u\|^2+\|VS\|_{\Frob}^{2}.
  \label{eq:normal-form}
\end{equation}
The coordinate \(t\) is flat and parametrizes the gauge-orbit directions that are nontrivial at the canonical point. The residual \(\GL(s)\) stabilizer acts inside the \((V,S)\) block and is therefore already included in its reduced-rank singularity.
\end{proposition}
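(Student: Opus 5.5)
The plan is to build the coordinates in two stages. First we change coordinates on the composite side using the Schur chart \eqref{eq:schur-coordinate-map}. Then we lift this chart to the factor pair $(A,B)$ and complete it with gauge coordinates.

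\textbf{Composite side.} In the Schur chart $C\mapsto(C_{11},C_{12},C_{21},R)$, the teacher \eqref{eq:canonical-C} sits at $(\Sigma_r,0,0,0)$. Set $u=(C_{11}-\Sigma_r,\,C_{12},\,C_{21})$, which has $r^2+2ra=r(2d-r)$ entries. Because the chart is a local analytic diffeomorphism with nonsingular differential at $\cstar$, two-sided Lipschitz bounds give
\[
\|C-\cstar\|_{\Frob}^2\asympLR\|u\|^2+\|R\|_{\Frob}^2
\]
near $\cstar$. This reduces \eqref{eq:normal-form} to producing factor coordinates in which $u$ and $R=VS$ appear as coordinate functions or as products of coordinates.

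\textbf{Factor side.} On a neighbourhood of \eqref{eq:canonical-factors}, consider the analytic map
\[
\Phi(A,B)=\bigl(u(AB^{\T}),\;V=A_2U,\;S=V_0B_2^{\T},\;t\bigr),
\]
with $U,V_0$ the analytic factorization of $P$ from \eqref{eq:VS-block}, and with $t$ still to be chosen. The key step is to show that $(u,V,S)$ has surjective differential at the canonical point. The differential of $u$ was already computed: the first-order $\Delta C$ with free blocks $\Delta C_{11},\Delta C_{12},\Delta C_{21}$ is onto $\R^{r(2d-r)}$, and it uses only $A_{11},B_{11},A_{21},B_{21}$. At the canonical point $A_2=B_2=0$, so $dV=\Delta A_2\,U$ and $dS=V_0\,\Delta B_2^{\T}$. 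Here $U$ and $V_0$ are evaluated at $P=\mathrm{diag}(0,I_s)$, so we may take $U=V_0^{\T}=[0;I_s]$. Then $dV=A_{22}$ and $dS=B_{22}^{\T}$. These involve blocks disjoint from those feeding $du$, so the combined differential is onto $\R^{r(2d-r)+2as}$.

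\textbf{Completing the chart.} The total dimension is $2dk$. We need
\[
2dk-r(2d-r)-2as=r(2k-r),
\]
which follows from $2dk-2dr+r^2-2(d-r)(k-r)=2rk-r^2$. So we choose $t$ as linear coordinates on a complement of the kernel of $d(u,V,S)$; that kernel is spanned by $A_{12},B_{12}$ and the combinations in $(A_{11},B_{11})$ that leave $\Delta C_{11}$ fixed. By the inverse function theorem, $\Phi$ is then a local analytic diffeomorphism. In these coordinates $AB^{\T}$ has Schur coordinates $(u,VS)$ and does not depend on $t$, so \eqref{eq:normal-form} follows from the composite-side equivalence.

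\textbf{Gauge interpretation and the main obstacle.} To identify $t$ with gauge directions, the plan is to check that the tangent \eqref{eq:gauge-tangent} at the canonical point, $\delta A=\Astar X$ and $\delta B=-\Bstar X^{\T}$, fills exactly the $(A_{11},A_{12},B_{11},B_{12})$ directions in the kernel. This gives dimension $k^2-s^2=r(2k-r)$, and the missing $s^2$ is the stabilizer $\GL(s)$, which acts as $V\mapsto V\xi$, $S\mapsto\xi^{-1}S$ and preserves $VS$. The step I expect to be delicate is showing that $R=VS$ holds exactly rather than only to leading order, which relies on the identity \eqref{eq:R-factor-P}. Verifying it is a short block computation using $C_{ij}=A_iB_j^{\T}$ and $P$ idempotent, and I would do that first.
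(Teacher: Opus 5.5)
Your argument is correct, but you build the factor-side chart differently from the paper.

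\textbf{The paper's route.} The paper constructs an explicit local gauge section. Two gauge transformations $\xi_1,\xi_2$ bring the top row blocks to $A_1=[\Sigma_r^{1/2}\ 0]$ and $B_1=[\widehat B_{1a}\ 0]$. On that section one reads off directly $R=A_{2b}B_{2b}^{\T}$, hence $V=A_{2b}$ and $S=B_{2b}^{\T}$. The $r(2k-r)$ coordinates $t$ are literally the group parameters entering $\xi_1$ and the lower-left block of $\xi_2$.

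\textbf{Your route.} You use the identity $R=A_2PB_2^{\T}$ together with the analytic frame $P=UV_0$, which gives $R=VS$ exactly at every nearby point. You then check that $d(u,V,S)$ is onto at the canonical point: the blocks $(A_{11},B_{11}),A_{21},B_{21}$ feed $du$, while $A_{22},B_{22}$ feed $(dV,dS)$. The chart is then completed by the inverse function theorem.

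\textbf{What each buys.} Your route is shorter and needs no explicit elimination. It also makes transparent that flatness in $t$ holds for any completion, because $C$ is determined by its Schur coordinates $(u,VS)$.

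What it gives up is the literal gauge meaning of $t$. In your chart, $t$ is tied to the gauge orbit only infinitesimally at the canonical point, where $\ker d(u,V,S)$ coincides with the gauge tangent space, as you verify. The paper's section makes $t$ an orbit coordinate on the whole neighbourhood. It also exhibits the residual $\GL(s)$ concretely as the block-diagonal gauge that preserves the section.

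For the LLC computation only the normal form matters, so either construction suffices. Your normalization $U=V_0^{\T}=[0;I_s]$ at the canonical point can be realized by the explicit frame $U=PE$, $V_0=(E^{\T}PE)^{-1}E^{\T}P$, with $E$ the last $s$ coordinate vectors.

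\textbf{One wording fix.} Read literally, linear coordinates on a complement $W$ of $\ker d(u,V,S)$, i.e.\ the projection onto $W$ along the kernel, vanish on the kernel. That choice would leave $d\Phi$ singular. What you need is that $dt$ restricts to an isomorphism on the kernel. For example, $t=(A_{12},B_{12},A_{11})$ works, since on the kernel $A_{11}$ determines $B_{11}$. Your description of the kernel suggests this is what you intend.
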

The Schur map \eqref{eq:schur-coordinate-map} is a local analytic diffeomorphism, so \(\|C-\cstar\|_{\Frob}^{2}\) is locally equivalent to the sum of the squared regular block coordinates and \(\|R\|_{\Frob}^{2}\). A local gauge section can be constructed explicitly. Split the top row blocks as \(A_1=[A_{1a}\ A_{1b}]\) and \(B_1=[B_{1a}\ B_{1b}]\), with \(A_{1a},B_{1a}\in\R^{r\times r}\). Near the canonical point these active blocks are invertible. Successive gauge transformations set
\[
  A_1=[\Sigma_r^{1/2}\ 0],
  \qquad B_1=[\widehat B_{1a}\ 0],
\]
while retaining the inactive \(\GL(s)\) freedom. On this section,
\begin{align*}
  C_{11}&=\Sigma_r^{1/2}\widehat B_{1a}^{\T},
  &C_{12}&=\Sigma_r^{1/2}B_{2a}^{\T},\\
  C_{21}&=A_{2a}\widehat B_{1a}^{\T},
  &R&=A_{2b}B_{2b}^{\T}.
\end{align*}
Thus \((C_{11}-\Sigma_r,C_{12},C_{21})\) provide \(r(2d-r)\) regular coordinates, and one may take \(V=A_{2b}\), \(S=B_{2b}^{\T}\). The omitted coset coordinates have dimension \(r(2k-r)\) and do not enter the composite because of gauge invariance. The explicit elimination and the stabilizer count are given in \cref{sec:proof-slice,sec:proof-stabilizer}.

The dimension count is exact:
\begin{equation}
  r(2d-r)+2(d-r)(k-r)+r(2k-r)=2dk.
  \label{eq:dimension-identity}
\end{equation}
To see the gauge term, the stabilizer of \((\Astar,\Bstar)\) inside \(\GL(k)\) is isomorphic to \(\GL(s)\) and has dimension \(s^2\). Hence the orbit dimension is
\begin{equation}
  k^2-s^2=r(2k-r).
  \label{eq:gauge-orbit-rank-r}
\end{equation}
This explains why subtracting \(k^2\) from the parameter count is wrong when \(r<k\): the stabilizer is larger, and the remaining \(2as\) coordinates enter through the higher-order product \(VS\).

\subsection{Exact coefficient and multiplicity}

The zero-target block
\begin{equation}
  K_0(V,S)=\|VS\|_{\Frob}^{2},
  \qquad V\in\R^{a\times s},\quad S\in\R^{s\times a},
  \label{eq:zero-target-block}
\end{equation}
 is the square, symmetric-outer-dimension specialization of reduced-rank matrix factorization. The exact reduced-rank RLCT and multiplicity were determined by \citet{AoyagiWatanabe2005}.

\begin{theorem}[Exact canonical rank-\(r\) LLC]
\label{thm:rank-r}
Let \(0\leq r\leq k\leq d\), and let the teacher factors be the canonical balanced pair \eqref{eq:canonical-factors}. Under \cref{ass:identifiability}, the local LLC and multiplicity of the graph-attention model are
\begin{align}
  \lambda_r
  &=\frac{r(2d-r)}{2}
    +\frac{(d-r)(k-r)-\lfloor (k-r)^2/4\rfloor}{2},
  \label{eq:lambda-r-decomposed}\\
  m_r
  &=\begin{cases}
      1,&k-r\text{ is even},\\
      2,&k-r\text{ is odd}.
    \end{cases}
  \label{eq:m-r}
\end{align}
Equivalently,
\begin{equation}
  \lambda_r
  =\frac{d^2}{2}
   -\frac{(2d-k-r)^2}{8}
   +\frac{1}{8}\ind_{\{k-r\text{ odd}\}}.
  \label{eq:lambda-r-compact}
\end{equation}
\end{theorem}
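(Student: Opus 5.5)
The plan is to chain the reductions already in place and then read off the coefficient of the one genuinely singular block from \citet{AoyagiWatanabe2005}.

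\textbf{Step 1: reduce to the normal form.} By \cref{prop:local-equivalence}, under \cref{ass:identifiability} the attention KL \(K(A,B)\) is bounded above and below by constant multiples of \(K_{\mathrm{mat}}(A,B)=\|AB^{\T}-\cstar\|_{\Frob}^2\) near the canonical pair. Two such functions share \(\lambda\) and \(m\), so it suffices to compute the local LLC and multiplicity of \(K_{\mathrm{mat}}\). Next apply \cref{prop:normal-form}. Its coordinate change is an analytic diffeomorphism, so the Jacobian is smooth and positive and the pulled-back prior is still smooth and positive at the origin. The problem is therefore to find the partition asymptotics of \(\|u\|^2+\|VS\|_{\Frob}^2\) in the variables \((u,V,S,t)\).

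\textbf{Step 2: split the integral over independent blocks.} The variables \(u\), \((V,S)\) and \(t\) are disjoint, and the prior may be sandwiched between constant multiples of a product density on a small box. Hence \(Z_U(\tau)\) is comparable to the product of three partition functions:
\begin{itemize}[label={}]
\item the \(u\)-integral, a regular quadratic in \(r(2d-r)\) variables, gives \(\tau^{-r(2d-r)/2}\) with multiplicity \(1\);
\item the \(t\)-integral is flat and contributes a constant;
\item the \((V,S)\)-integral gives \(\tau^{-\lambda_0}(\log\tau)^{m_0-1}\).
\end{itemize}
Exponents add across the product. Log powers also add, so the total multiplicity is \(1+m_0-1=m_0\). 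This gives \(\lambda_r=r(2d-r)/2+\lambda_0\) and \(m_r=m_0\).

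\textbf{Step 3: the reduced-rank block, which is the main obstacle.} Here \(K_0(V,S)=\|VS\|_{\Frob}^2\) with \(V\in\R^{a\times s}\) and \(S\in\R^{s\times a}\). This is reduced-rank regression with output dimension \(M=a\), input dimension \(N=a\), inner dimension \(H=s\) and true rank \(0\).

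I would check which case of Aoyagi--Watanabe's theorem applies. Since \(k\le d\), we have \(s\le a\), so the conditions \(N+0\le M+H\), \(M\le N+H\) and \(H\le M+N\) all hold. This places us in their main case. Their formula then gives
\[
\lambda_0=\frac{2H(M+N)-(M-N)^2-H^2}{8}=\frac{4as-s^2}{8}=\frac{as-s^2/4}{2}
\]
with \(m_0=1\) when \(s\) is even. When \(s\) is odd it gives \(\lambda_0=(4as-s^2+1)/8=\bigl(as-(s^2-1)/4\bigr)/2\) with \(m_0=2\).

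In both parities this equals \(\bigl(as-\lfloor s^2/4\rfloor\bigr)/2\). Substituting \(a=d-r\) and \(s=k-r\) yields \eqref{eq:lambda-r-decomposed} and \eqref{eq:m-r}. The delicate points are matching their conventions to ours: true rank \(0\), square outer dimensions, and the partition-function normalization with the \(\log\tau\) power equal to \(m-1\). Another delicate point is that their result is stated for a prior positive at the origin, which holds here.

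\textbf{Step 4: degenerate cases and the compact formula.} I would check the degenerate cases separately. When \(s=0\) (that is, \(r=k\)) the block is empty, and the result reduces to \cref{thm:full-rank}. When \(r=0\) the \(u\)-block is empty, and the claim is exactly the Aoyagi--Watanabe case.

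Finally, the compact form \eqref{eq:lambda-r-compact} is routine algebra. Expanding \(d^2/2-(2d-k-r)^2/8\) reproduces \(r(2d-r)/2+\bigl((d-r)(k-r)-(k-r)^2/4\bigr)/2\). The floor correction \(\bigl((k-r)^2/4-\lfloor (k-r)^2/4\rfloor\bigr)/2\) equals \(1/8\) exactly when \(k-r\) is odd, which gives the indicator term.
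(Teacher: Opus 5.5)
Your proposal is correct and follows essentially the same route as the paper's proof. You reduce to $K_{\mathrm{mat}}$ via \cref{prop:local-equivalence} and pass to the normal form of \cref{prop:normal-form}, where the regular block contributes $r(2d-r)/2$ and the gauge block contributes nothing; you then read off the $(V,S)$ block from Aoyagi--Watanabe with $M=N=a$, $H=s$ and true rank zero. Beyond the paper, you also spell out the product factorization of the partition function, the additivity of the $\log\tau$ powers that gives $m_r=m_0$, and the edge cases $r=0$ and $r=k$, all of which the paper leaves implicit.
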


\begin{proof}
By \cref{prop:normal-form}, the regular block contributes \(r(2d-r)/2\), the gauge-flat block contributes zero, and the remaining contribution is the RLCT of \eqref{eq:zero-target-block}. Specializing the Aoyagi--Watanabe formula to outer dimensions \(M=N=a\), hidden width \(H=s\), and true rank zero gives
\begin{equation}
  \lambda_0(a,s)
  =\frac{as-\lfloor s^2/4\rfloor}{2}
  =\frac{s(4a-s)}{8}
    +\frac18\ind_{\{s\text{ odd}\}},
  \label{eq:lambda-zero}
\end{equation}
with multiplicity two precisely when \(s\) is odd. Adding the regular block proves \eqref{eq:lambda-r-decomposed}; elementary algebra gives \eqref{eq:lambda-r-compact}.
\end{proof}

\begin{corollary}[Gauge-orbit invariance]
\label{cor:gauge-orbit-llc}
For every fixed \(\xi\in\GL(k)\), the same pair \((\lambda_r,m_r)\) holds locally at
\[
  (\Astar\xi,\Bstar\xi^{-\T}).
\]
\end{corollary}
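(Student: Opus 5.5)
The plan is to transport the local analysis at the canonical point to the point \((\Astar\xi,\Bstar\xi^{-\T})\) by the gauge map itself. Define \(\Phi_\xi(A,B)=(A\xi,B\xi^{-\T})\). It is linear and invertible on \(\R^{d\times k}\times\R^{d\times k}\), with inverse \(\Phi_{\xi^{-1}}\), so it is an analytic diffeomorphism with constant Jacobian determinant. It maps the canonical pair \((\Astar,\Bstar)\) to \((\Astar\xi,\Bstar\xi^{-\T})\). By \eqref{eq:gauge-invariance-composite}, \(K\circ\Phi_\xi=K\) identically, because \(K\) depends on \((A,B)\) only through \(C=AB^{\T}\), and this composite is unchanged. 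The same identity holds for \(K_{\mathrm{mat}}\).

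Now take a small neighbourhood \(U'\) of \((\Astar\xi,\Bstar\xi^{-\T})\) and a smooth density \(\varphi'\) that is positive there. Change variables \(\theta'=\Phi_\xi(\theta)\) in \eqref{eq:local-partition}. This gives
\[
Z_{U'}(\tau)=|\det D\Phi_\xi|\int_{\Phi_\xi^{-1}(U')}\exp[-\tau K(\theta)]\,(\varphi'\circ\Phi_\xi)(\theta)\dd\theta .
\]
Here \(\Phi_\xi^{-1}(U')\) is a small neighbourhood of the canonical pair, and \(\varphi'\circ\Phi_\xi\) is smooth and strictly positive at \(\Astar,\Bstar\). So the right-hand side is a constant multiple of a local partition function at the canonical point, with an admissible density. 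We need the fact that the leading asymptotics \eqref{eq:partition-asymptotic} do not depend on the choice of positive smooth density or of the sufficiently small neighbourhood. This is standard in the paper's framework and is already used implicitly in \cref{thm:rank-r}. With it, the exponent and log power come out as \((\lambda_r,m_r)\). Only the constant \(C\) changes, by the factor \(|\det D\Phi_\xi|=|\det\xi|^{d}|\det\xi|^{-d}=1\), though its value does not matter.

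The main point to check is that \cref{thm:rank-r} is genuinely a statement about the canonical point, with hypotheses that are invariant in the needed sense. \cref{ass:identifiability} and the local equivalence of \cref{prop:local-equivalence} concern \(\widetilde K\) near \(\cstar\) in \(C\)-space. Since \(\Phi_\xi\) fixes the composite \(\cstar\), these hypotheses hold verbatim at the new point, and no new verification is required.

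An alternative is to redo \cref{prop:normal-form} at the transported point by composing its coordinates \((u,V,S,t)\) with \(\Phi_\xi^{-1}\). This yields analytic coordinates in which \(K_{\mathrm{mat}}\asympLR\|u\|^2+\|VS\|_{\Frob}^2\). I would mention it only as a remark, since the change-of-variables argument already suffices.
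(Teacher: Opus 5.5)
Your proposal is correct and follows the paper's argument: the gauge map \((A,B)\mapsto(A\xi,B\xi^{-\T})\) is a linear analytic diffeomorphism that fixes \(AB^{\T}\), and hence the loss, and has Jacobian \(\det(\xi)^{d}\det(\xi)^{-d}=1\). Pulling the local partition integral back to the canonical pair therefore only replaces the density by another smooth positive one, which leaves \((\lambda_r,m_r)\) unchanged. Your explicit change-of-variables formula spells out what the paper states in one line, and your extra check that the hypotheses hold at the new point is harmless but not needed for the transport argument.
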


\begin{proof}
The gauge map is an analytic diffeomorphism of factor space and leaves \(AB^{\T}\), hence the loss, unchanged. Its Jacobian determinant on the full \((A,B)\) space is \(\det(\xi)^d\det(\xi)^{-d}=1\), so a smooth positive local density remains smooth and positive after the coordinate change. The local exponent and pole multiplicity are therefore unchanged.
\end{proof}

For odd \(s\), the two adjacent integers \((s-1)/2\) and \((s+1)/2\) maximize the rank-stratum term \(j(s-j)\) in the resolution calculation. This tie is reflected in the order-two pole and the additional \(\log\tau\) factor. For even \(s\), there is a unique maximizer and the multiplicity is one.

\section{Limitations and Future Work}

Our result holds at the teacher pair \eqref{eq:canonical-factors}, where the unused columns of \(A\) and \(B\) are zero. If \(r<k\), there are other pairs with the same product \(\cstar\), where the unused columns are nonzero and cancel each other. We do not compute the LLC at those points. Since the exact LLC values are known, it would be useful to run the SGLD estimator introduced by \citet{LauEtAl2025}, as well as MALA~\citep{RobertsRosenthal1998} and HMC~\citep{Neal2011}, both to validate our results empirically and to check how well these estimators work. Finally, extending our analysis to a full transformer would require more work. Further, the connection to the broader Symmetry-Constrained Learning literature is interesting.


\bibliographystyle{plainnat}
\bibliography{references}
\appendix
\section{Proof details for the rank-stratified normal form}

\subsection{First-order regular coordinates}

At the canonical point \eqref{eq:canonical-factors}, the derivative in \eqref{eq:composite-derivative} has the block form 
The maps to \(\Delta C_{12}\) and \(\Delta C_{21}\) are surjective because \(\Sigma_r^{1/2}\) is invertible. The map
\[
  (A_{11},B_{11})
  \longmapsto
  A_{11}\Sigma_r^{1/2}+\Sigma_r^{1/2}B_{11}^{\T}
\]
is surjective onto \(\R^{r\times r}\). Hence the derivative image dimension is \(r^2+2ar=r(2d-r)\). These output coordinates can be completed to regular local coordinates by the implicit-function theorem.

The first-order kernel is larger than the gauge tangent. Its excess dimension is
\begin{align}
  \left[2dk-r(2d-r)\right]-r(2k-r)
  &=2(d-r)(k-r)\\
  &=2as,
\end{align}
which is precisely the number of entries in \((V,S)\). These directions are invisible to the Hessian but are not flat to all orders.

\subsection{Schur-complement identity}

Block Gaussian elimination gives
\begin{equation}
  \begin{bmatrix}
    I&0\\-C_{21}C_{11}^{-1}&I
  \end{bmatrix}
  \begin{bmatrix}
    C_{11}&C_{12}\\C_{21}&C_{22}
  \end{bmatrix}
  =
  \begin{bmatrix}
    C_{11}&C_{12}\\0&R
  \end{bmatrix}.
  \label{eq:block-elimination}
\end{equation}
The left factor is invertible, so \(\rank(C)=\rank(C_{11})+\rank(R)=r+\rank(R)\) near the teacher. The inverse of \eqref{eq:schur-coordinate-map} is
\begin{equation}
  C_{22}=R+C_{21}C_{11}^{-1}C_{12},
\end{equation}
which is analytic while \(C_{11}\) is invertible. Therefore the coordinate change preserves the local LLC.

Substituting \(C_{ij}=A_iB_j^{\T}\) gives
\begin{align}
  R
  &=A_2B_2^{\T}
    -A_2B_1^{\T}(A_1B_1^{\T})^{-1}A_1B_2^{\T}\\
  &=A_2\left[I_k-B_1^{\T}(A_1B_1^{\T})^{-1}A_1\right]B_2^{\T}.
\end{align}
For
\[
  Q=B_1^{\T}(A_1B_1^{\T})^{-1}A_1,
\]
one has \(Q^2=Q\) and \(\rank(Q)=r\). Thus \(P=I-Q\) is idempotent of rank \(s\). Constant-rank analytic matrix families admit local analytic frames for their image and kernel, yielding the factorization used in \eqref{eq:VS-block}.

\subsection{Explicit local gauge section}
\label{sec:proof-slice}
Write the top row blocks as
\[
  A_1=[A_{1a}\ A_{1b}],
  \qquad B_1=[B_{1a}\ B_{1b}],
\]
where the active \(r\times r\) blocks are invertible near the canonical point, and set \(D=\Sigma_r^{1/2}\). The gauge matrix
\[
  \xi_1=
  \begin{bmatrix}
    A_{1a}^{-1}D&-A_{1a}^{-1}A_{1b}\\
    0&I_s
  \end{bmatrix}
\]
sets \(A_1\xi_1=[D\ 0]\). After applying this transformation to both factors, write the transformed top block of \(B\) again as \([B_{1a}\ B_{1b}]\). The matrix
\[
  \xi_2=
  \begin{bmatrix}I_r&0\\X&I_s\end{bmatrix},
  \qquad X^{\T}=B_{1a}^{-1}B_{1b},
\]
preserves \([D\ 0]\) and sends \(B_1\xi_2^{-\T}\) to \([B_{1a}\ 0]\). Both transformations depend analytically on the original factors.

On this section, split the lower row blocks as \(A_2=[A_{2a}\ A_{2b}]\) and \(B_2=[B_{2a}\ B_{2b}]\). Then
\begin{align}
  C_{11}&=DB_{1a}^{\T},
  &C_{12}&=DB_{2a}^{\T},\\
  C_{21}&=A_{2a}B_{1a}^{\T},
  &C_{22}&=A_{2a}B_{2a}^{\T}+A_{2b}B_{2b}^{\T}.
\end{align}
Consequently,
\[
  A_{2a}=C_{21}C_{11}^{-1}D,
  \qquad B_{2a}^{\T}=D^{-1}C_{12},
  \qquad R=A_{2b}B_{2b}^{\T}.
\]
Thus the section variables are analytically equivalent to the regular composite blocks together with \(V=A_{2b}\) and \(S=B_{2b}^{\T}\). The parameters used in \(\xi_1\) and the lower-left block of \(\xi_2\) supply \(r^2+2rs=r(2k-r)\) orbit coordinates. The remaining block-diagonal \(\GL(s)\) action is the stabilizer at the canonical point and acts within \((V,S)\).

\subsection{Gauge stabilizer at the canonical point}
\label{sec:proof-stabilizer}

Write a gauge matrix \(\xi\) in active/redundant blocks. The condition \(\Astar\xi=\Astar\) forces the first block row of \(\xi\) to be \([I_r\;0]\). The condition \(\Bstar\xi^{-\T}=\Bstar\) then forces the lower-left block to vanish. Hence
\begin{equation}
  \operatorname{Stab}(\Astar,\Bstar)
  =\left\{
    \begin{bmatrix}I_r&0\\0&G_{22}\end{bmatrix}
    :G_{22}\in\GL(s)
   \right\}.
\end{equation}
Its dimension is \(s^2\), and the orbit dimension is \(k^2-s^2=r(2k-r)\), as stated in \eqref{eq:gauge-orbit-rank-r}.

\subsection{Specialization of the reduced-rank formula}

The Aoyagi--Watanabe reduced-rank theorem for an \(M\times H\) factor times an \(H\times N\) factor, with true product rank \(r_0\), has a central regime defined by
\[
  N+r_0\leq M+H,
  \quad M+r_0\leq N+H,
  \quad H+r_0\leq M+N.
\]
For the zero-target block \eqref{eq:zero-target-block}, set \(M=N=a\), \(H=s\), and \(r_0=0\). Since \(s\leq a\), all three inequalities hold. If \(s\) is even, the theorem gives
\begin{align}
  \lambda_0
  &=\frac{-s^2-a^2-a^2+2\{s(2a)+a^2\}}{8}\\
  &=\frac{4as-s^2}{8}.
\end{align}
If \(s\) is odd, the numerator gains one. These two cases are equivalent to \eqref{eq:lambda-zero}. The multiplicity is two in the odd case and one otherwise~\citep{AoyagiWatanabe2005}.

\IfFileExists{checklist.tex}{%
  \newpage
  \input{checklist.tex}
}{%
  \typeout{WARNING: checklist.tex not found; NeurIPS 2026 requires the official checklist for submission.}%
}

\end{document}